%% file: ewrl_2026.tex
\documentclass{article}

\usepackage[preprint]{ewrl_2026}

\usepackage[utf8]{inputenc} 
\usepackage[T1]{fontenc}    
\usepackage{hyperref}       
\usepackage{url}            
\usepackage{booktabs}       
\usepackage{amsfonts}       
\usepackage{nicefrac}       
\usepackage{microtype}      
\usepackage{xcolor}         
\usepackage{amsmath} 

\usepackage{graphicx}
\usepackage{subcaption}
\usepackage{caption}
\usepackage{makecell}

\usepackage{enumitem}
\usepackage{comment}
\usepackage{amssymb,amsthm}
\usepackage{wrapfig}
\usepackage{mathtools}
\newtheorem{proposition}{Proposition}

\usepackage{algorithm,algorithmic}
\usepackage[table]{xcolor}

\title{Survival Reinforcement Learning: \\ 
Toward Scalable Self-Supervised RL}

\author{%
  Franki Nguimatsia Tiofack$^*$
  \\
  Inria and École Normale Supérieure\\
  PSL Research University\\
  Paris, France \\
   \\
  \And
  Fabian Schramm \\
  Inria and École Normale Supérieure\\
  PSL Research University\\
  Paris, France \\
  \And
  Théotime Le Hellard \\
  Inria and École Normale Supérieure\\
  PSL Research University\\
  Paris, France \\
  \And
  Justin Carpentier \\
  Inria and École Normale Supérieure\\
  PSL Research University\\
  Paris, France \\
}

\begin{document}

\maketitle

\begin{abstract}
While self-supervised Contrastive Reinforcement Learning (CRL) has shown remarkable depth-scaling capabilities, successfully using networks over 64 layers, scaled CRL still struggles with long-horizon goal-conditioned planning due to the uniformity-tolerance dilemma inherent in contrastive losses. 
We introduce Survival Reinforcement Learning (SRL), an online classification-based alternative that extends the survival value learning framework by maximizing the agent's dwell time at target goals.
SRL bypasses the structural constraints of CRL and mitigates the "bang-bang" control solutions inherent to survival frameworks, which often induce undesirable behavior in complex dynamical systems.
Evaluated across diverse robotic benchmarks, scaled SRL matches state-of-the-art CRL on manipulation tasks and outperforms it by 2x to 8x on stable, long-horizon locomotion tasks.
Our results provide strong additional evidence that classification-based methods may serve as a key primitive in the broader effort to scale reinforcement learning.
\end{abstract}

\section{Introduction}
\label{sec:intro}

\input{sections/1-introduction}

\section{Related work}
\label{sec:related}

\input{sections/2-related-works}

\section{Background} 
\label{sec:background}

\input{sections/3-background-ewrl}

\section{Survival Reinforcement Learning}
\label{sec:method}

\input{sections/4-survival-reinforcement-learning}

\section{Experiments}
\label{sec:experiments}
\input{sections/5-experiments-ewrl}

\section{Discussion and Conclusion}
\label{sec:conclusion}

\input{sections/6-conclusion}

\clearpage
\bibliographystyle{plainnat}
\bibliography{refs}

\appendix

\section{Appendix}
\label{sec:appendix}

\input{sections/7-appendix-ewrl}

\end{document}

%% file: sections/1-introduction.tex
In natural language processing and computer vision, the systematic scaling of neural network capacity and dataset size has driven unprecedented breakthroughs, establishing scale as a primary driver of generalist capabilities~\citep{Kaplan20ScalingLaws,dosovitskiy2021an}. Empirical scaling laws suggest that performance improves predictably as a power law of compute, data, and parameter count~\citep{henighan2020scaling}, with state-of-the-art models reaching trillions of parameters~\citep{brown2020language,fedus2022switch,hoffmann2022training,deepseek2026deepseek}. A growing body of work in RL seeks to replicate this success by designing architectures and algorithms whose performance scales reliably with network capacity~\citep{neumann2022scaling,schwarzer2023bigger,nauman2024bigger,lee2024simba,wang2025}. By connecting the conventional wisdom on self-supervised learning to the self-supervised RL setting, \citet{wang2025} provide the first compelling evidence of a genuine scaling phenomenon in self-supervised RL, demonstrating that increasing network depth in the classification-based approach Contrastive RL (CRL) yields substantial performance gains on challenging goal-reaching tasks.
Yet, this scaling phenomenon remains specific to CRL as all non-contrastive goal-conditioned baselines fail to consistently benefit from increased depth. Even the contrastive approach, scaled to 64 layers, remains a bottleneck for complex, long-horizon planning. This raises a natural question: \emph{can we design alternative self-supervised RL objectives that scale as favorably with network depth as CRL or unlock even stronger performance gains?}

The limited performance of scaled CRL on long-horizon tasks can be attributed in part to a well-known limitation. The uniformity-tolerance dilemma~\citep{wang2021understanding, huang2023model} arises because InfoNCE-style losses encourage representations to be uniformly spread across the embedding space, penalizing semantically similar states and destroying the geometric structure necessary for long-horizon planning. A promising alternative is survival learning-based RL~\citep{tiofack2026svl}, which frames policy evaluation as a first-hitting-time estimation problem. Rather than relying on contrastive objectives, it trains the critic by classifying whether a given state-action pair belongs to trajectories leading toward a goal, and at which time step along the trajectory the goal is reached. Its objective is grounded in maximum-likelihood estimation over observed and right-censored trajectories, a formulation that avoids the pitfalls of representation collapse associated with InfoNCE-style losses and positions survival learning-based RL as a compelling candidate for a scalable, online self-supervised RL algorithm.

In this paper, we introduce Survival Reinforcement Learning (SRL), an online self-supervised RL algorithm that extends the survival learning framework~\citep{tiofack2026svl} to account for post-goal reachability. In the standard formulation, the agent minimizes a functional of the minimum time to reach the goal from a given state, a criterion that induces bang-bang control solutions~\citep{evans1983introduction, maurer2004second}. Indeed, as in prior work~\citep{wang2025}, a more desirable policy may be one that not only drives the agent to reach the goal, but also stabilizes it there. To address this, we extend the survival learning framework with a dwell time at goal formulation that explicitly incentivizes the agent to stabilize at the goal state upon reaching it. Leveraging the network architecture proposed by prior work and GPU-accelerated simulation~\citep{bortkiewicz2024accelerating,wang2025}, we conduct an empirical study of SRL's scaling behavior with network depth, compare it against state-of-the-art scaled CRL, and ablate the contribution of the dwell time at goal formulation. As reported Fig.~\ref{fig:teaser}, scaled SRL achieves competitive results on challenging goal-reaching tasks, with particularly strong performance in the AntMaze environments, providing additional evidence that classification-based objectives could be a good choice for unlocking the scaling potential of RL.

\begin{figure}[t]
    \centering
    \includegraphics[width=\textwidth]{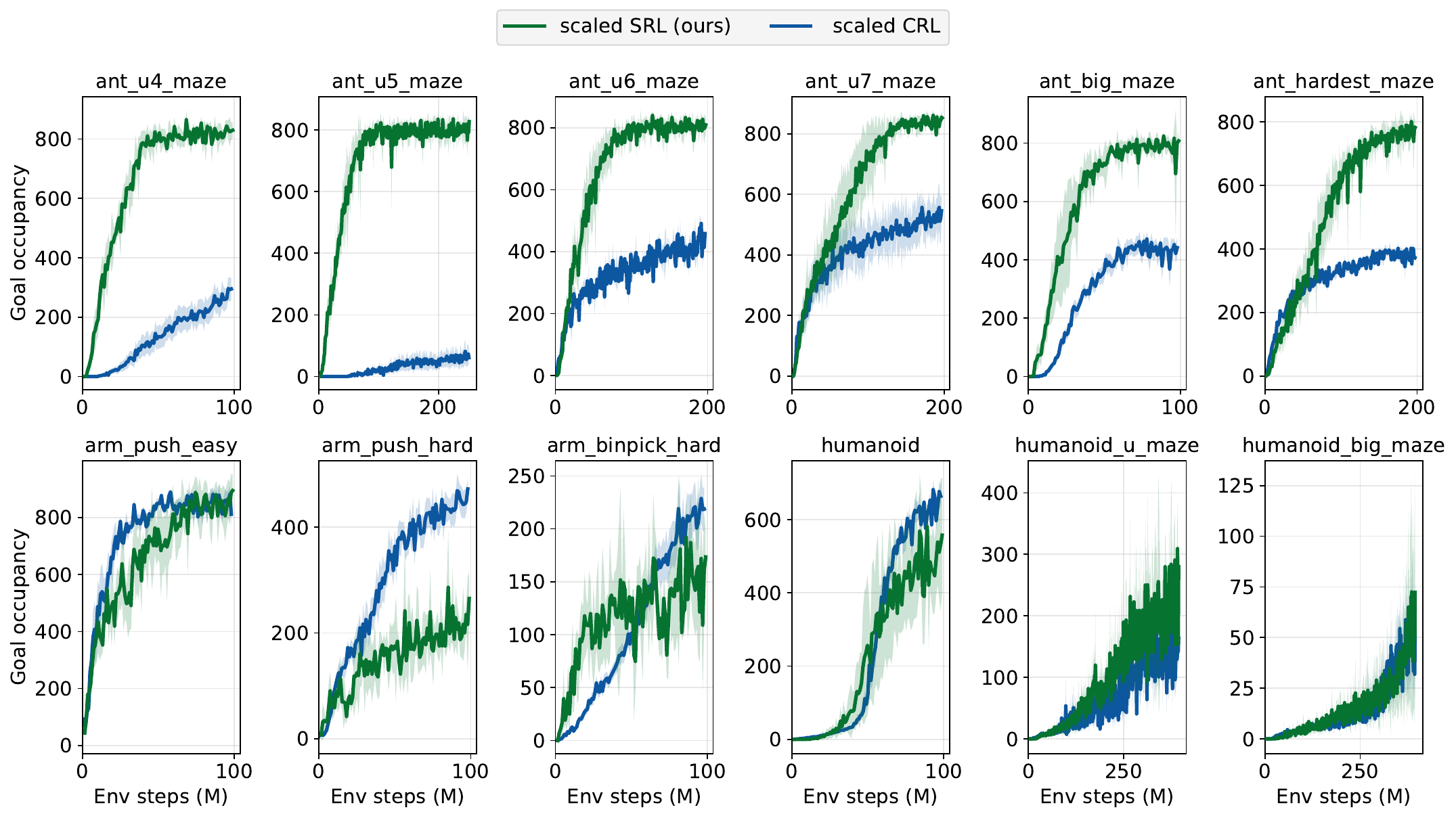}
    \caption{\textbf{Performance comparisons between SRL and CRL.} Time at goal of \textbf{SRL} (depth 32) against the state-of-the-art \textbf{CRL} (scaled up to depth 64,~\citet{wang2025}) on JaxGCRL.}
    \label{fig:teaser}
    \vspace{-2.1em}
\end{figure}

%% file: sections/2-related-works.tex
Designing architectures and algorithms that establish scaling laws for performance as a function of model and dataset size has been a long-standing challenge in the RL community~\citep{neumann2022scaling,ota2024framework,nauman2024bigger,lee2024simba,tuyls2023scaling}. Layer normalization~\citep{nauman2024bigger,ball2023efficient,nauman2024overestimation,lee2024simba} and residual connections~\citep{lee2024simba,nauman2024bigger,wang2025,ota2024framework} have emerged as the principal building blocks for stabilizing the training of wide and deep networks in RL. \citet{wang2025} further highlights the role of activation functions, empirically showing that Swish~\citep{ramachandran2018searching} is preferable to the widely adopted ReLU~\citep{nair2010rectified} when scaling contrastive RL. We adopt the architecture of~\citet{wang2025} as our backbone to evaluate the depth-scaling behavior of SRL. Additionally, to encode the temporal structure inherent to the survival-based formulation, we follow~\citet{tiofack2026svl} and employ the temporal basis described in Fig.~\ref{fig:network-architecture}.

On the algorithmic side, classification-based approaches have yielded promising results toward scalable RL. Based on the conjecture that classification objectives are inherently more stable than their regression-based counterparts~\citep{torgo1996regression}, and thus better suited to deep networks, \citet{farebrother2024stop} reformulate value-based RL as a classification problem by discretizing the temporal-difference target into a categorical cross-entropy loss. \citep{wang2025} advances a related argument, observing that the InfoNCE objective underlying CRL is a generalization of cross-entropy loss: rather than regressing value estimates, it learns by classifying whether state-action pairs belong to trajectories leading toward a given goal. 
Our work follows this line of reasoning. 
The survival learning objective frames policy evaluation as classification, predicting not only whether state-action pairs belong to trajectories leading toward a given goal, but also the time instant along the trajectory at which the goal is reached.

Several previous works formulate self-supervised RL as a classification problem~\citep{ghosh2019learning,eysenbach2020c,eysenbach2022contrastive,yang2022rethinking,tiofack2026svl,mezghani2023learning}, but diverge in how they treat post-goal behavior. Contrastive methods~\citep{eysenbach2020c,eysenbach2022contrastive} optimize goal-occupancy, training the agent to reach and stabilize at the goal. A complementary line of work~\citep{ghosh2019learning,yang2022rethinking,mezghani2023learning,tiofack2026svl} focuses on goal-reaching, leaving post-goal behavior unmodeled, an omission that can produce undesirable outcomes for complex or unstable problems such as humanoid locomotion. This pathology is acute in the survival learning formulation~\citep{tiofack2026svl}, where the policy minimizes a functional of the first-hitting time, whose optimal solution leads to bang-bang control~\citep{evans1983introduction,maurer2004second} that drives the agent toward the goal as quickly as possible, with no incentive to decelerate, stabilize, or remain upon arrival. The dwell time at goal formulation introduced in this work mitigates this limitation by extending the survival learning objective to explicitly incentivize the agent to stabilize at the goal.

%% file: sections/3-background-ewrl.tex
\textbf{Goal-conditioned RL.} We consider a goal-conditioned Markov decision process (MDP) \mbox{$\mathcal{M} = (\mathcal{S}, \mathcal{A}, \mathcal{G}, \mathcal{P}, r_g, \varphi, p_g, \mu, \gamma)$}, where $\mathcal{S}$, $\mathcal{A}$, and $\mathcal{G}$ denote the state, action, and goal spaces, respectively. Let $\Delta(\mathcal{X})$ denote the set of probability distributions over a set $\mathcal{X}$. The transition dynamics $\mathcal{P} : \mathcal{S} \times \mathcal{A} \to \Delta(\mathcal{S})$ governs the environment; $\varphi : \mathcal{S} \to \mathcal{G}$ projects states into the goal space; $p_g \in \Delta(\mathcal{G})$ is the goal distribution; $\mu : \mathcal{G} \to \Delta(\mathcal{S})$ is the goal-conditioned initial-state distribution; and $\gamma \in [0,1)$ is the discount factor. At the start of each episode, a goal $g \sim p_g$ is sampled, and the agent is initialized at $s_0 \sim \mu(\cdot \mid g)$. The agent then follows a goal-conditioned policy $\pi : \mathcal{S} \times \mathcal{G} \to \Delta(\mathcal{A})$. 
Following prior work~\citep{andrychowicz2017hindsight,chane2021goal,blier2021learning}, we adopt a sparse reward formulation by setting $r_g(s_t, a_t) = -\mathbf{1}_{\{\varphi(s_{t+1}) \notin \mathcal{B}_\epsilon(g)\}}$ with \mbox{$\mathcal{B}_\epsilon(g) = \left\{s \in \mathcal{S} \mid \|\varphi(s) - g\| \leq \epsilon\right\} $}. 
For a given goal-conditioned policy $\pi$, the corresponding state-action value function is  defined as:
\begin{equation}
\label{Q_function}
Q_g^\pi(s,a) =  \mathbb{E}_{\tau \sim \rho^\pi(\cdot |g)} \left[\sum_{t=0}^\infty\gamma^t r_g(s_t,a_t) |\substack{
    s_0=s \\ a_0 = a
} \right],
\end{equation}
where $\rho^\pi(\tau|g) = \mu(s_0 \mid g)\prod_{t=0}^\infty \pi(a_t \mid s_t, g) \mathcal{P}(s_{t+1} \mid s_t, a_t)$ denotes the likelihood of the rollout trajectory $\tau = (s_0, a_0, s_1, a_1, \dots)$ under $\pi$. 

\textbf{Metrics.}  We evaluate a goal-conditioned policy $\pi$ over fixed horizon $H$ using two metrics:
\begin{equation}
\label{eqn:sr-go}
\mathrm{SR}(\pi) = \mathbb{E}_{\substack{g \sim p_g \\ \tau \sim \rho^\pi(\cdot\mid g)}} \!\left[ \mathbf{1}_{\{ \exists\, t \le H : \varphi(s_t) \in \mathcal{B}_\epsilon(g) \}} \right] \,\text{ and  }\,  \mathrm{GO}(\pi) = \mathbb{E}_{\substack{g \sim p_g \\ \tau \sim \rho^\pi(\cdot\mid g)}} \!\left[ \sum_{t=1}^H \mathbf{1}_{\{ \varphi(s_t) \in \mathcal{B}_\epsilon(g) \}} \right].
\end{equation}
The success rate $\mathrm{SR}(\pi)$ measures the probability that the agent reaches the goal region at least once during an episode, capturing task completion regardless of subsequent behavior. Goal-occupancy $\mathrm{GO}(\pi)$ measures the expected number of timesteps spent within the goal region, capturing the agent's ability to reach \emph{and sustain} goal satisfaction. The two metrics can diverge significantly: an agent may achieve a high success rate by transiently passing through the goal region while accumulating negligible goal occupancy due to overshooting or instability upon arrival.
Depending on the algorithmic design, the learned policy may explicitly target goal-occupancy or success rate.

\textbf{Policy evaluation via survival analysis.} In the GCRL setting, \citet{tiofack2026svl} reformulate policy evaluation through survival analysis framework~\citep{kaplan1958nonparametric,katzman2018deepsurv}. 
They model the first-hitting time of the goal region, a random variable defined as the minimum number of steps required to transition from state $s$ to goal $g$ when taking action $a$ and subsequently following policy $\pi$: 
\begin{equation} 
T_g^\pi(s,a) = \inf \Bigl\{t \geq 0 : \varphi(s_{t+1}) \in \mathcal{B}_\epsilon(g) \mid \substack{s_0 = s,\ a_0 = a \\ a_t \sim \pi(\cdot \mid s_t, g),\ s_{t+1} \sim \mathcal{P}(\cdot \mid s_t, a_t)}\Bigr\},
\label{first_hit}
\end{equation}
with the convention that $T_g^{\pi}(s,a) = \infty$ if the goal is never reached. Under the assumption that each episode terminates upon reaching the goal region --~equivalently, that every goal state is absorbing~-- the goal-conditioned action-value function admits the following closed-form relationship with $T_g^\pi(s,a)$: 
\begin{equation} 
Q_g^\pi(s,a) = -\sum_{t=0}^{\infty}\gamma^t \Pr\bigl(T_g^\pi(s, a) > t\bigr). 
\label{svl_q_function} 
\end{equation}
This identity, obtained by adapting Proposition~4.1 of \citet{tiofack2026svl} to the action-value function, shows that critic learning reduces to estimating the survival probability \mbox{$S^\pi(t\mid s,a, g) \triangleq \Pr(T_g^\pi(s, a) > t)$}. This is estimated via a parametrized hazard model 
\begin{equation} 
h^{\pi}_\theta(t \mid s,a,g) \triangleq \Pr\bigl(T_g^\pi(s, a) = t \mid T_g^\pi(s, a) \geq t\bigr), 
\end{equation} 
which gives the probability that the goal is reached at step $t$, given that it has not been reached before. 
The survival probability is then recovered through the product formula \mbox{$S^\pi(t\mid s,a,g)=\prod_{k=0}^{t}\bigl(1-h^\pi_\theta(k\mid s,a,g)\bigr)$}, and the hazard model is trained by maximizing the right-censored likelihood over collected trajectories:
\begin{equation}
\label{nll}
\begin{aligned}
\mathcal{L}(\theta) &= -\mathbb{E}_{(s,a,g,t_h,c,\delta) \sim \mathcal{D}}\!\left[
\delta\,\ell_{t_h}(\theta)+(1-\delta)\,\ell_c(\theta)\right]\\
 \text{ with } \ell_c(\theta) &=
 \sum_{j=0}^{c}\log\bigl(1-h_{\theta}^\pi(j\mid s,a,g)\bigr)\\ 
 \text{and }  \ell_{t_h}(\theta) &= \log h_{\theta}^\pi(t_h\mid s,a,g)
+\sum_{j=0}^{t_h-1}\log\bigl(1-h_{\theta}^\pi(j\mid s,a,g)\bigr), 
\end{aligned}
\end{equation}
where $\delta\in\{0,1\}$ indicates whether the goal is reached within $c$ steps, $t_h$ denotes the hitting time when $\delta=1$. For a complete derivation, see~\citet {tiofack2026svl}. Note that this formulation does not explicitly model post-goal achievement and primarily focuses on success rate.

%% file: sections/4-survival-reinforcement-learning.tex
We introduce Survival Reinforcement Learning (SRL) by first extending the survival learning framework with a dwell-time-at-goal formulation that accounts for post-goal-reaching behavior, and then building an online goal-conditioned actor-critic algorithm on top of this extended critic objective.

\subsection{Dwell-time-at-goal formulation}

By modeling only the first-hitting time, the original survival learning formulation is indifferent to post-achievement behavior. As is well understood in optimal control theory, minimizing a first-hitting-time functional typically yields bang-bang solutions~\citep{evans1983introduction,maurer2004second,bellman1956bang}. The agent reaches the goal as rapidly as possible, with no incentive to decelerate, stabilize, or remain. In practice, as illustrated in Section~\ref{sec:experiments}, this manifests as a humanoid that lunges toward the target at high velocity and falls upon arrival, or as a manipulator that overshoots the target configuration. 
Post-goal stability is therefore necessary for reliable performance on locomotion and manipulation tasks. We address this by reformulating the hitting-time criterion to require the agent to match a short sequence of consecutive goals.

\textbf{Extended hitting time.} Given a state trajectory $(s_0, s_1, \ldots)$, we define the \emph{$k$-length goal sequence} sampled at offset $u \geq 0$ as:
\begin{equation}
\label{eq:goal_seq}
\mathbf{g}^{(k)} = \bigl(\varphi(s_{u}),\varphi(s_{u+1}),\ldots,\varphi(s_{u+k-1})\bigr) \in \mathcal{G}^{k}.
\end{equation}
The first-hitting time is then extended to require $k$ consecutive goal matches:
\begin{equation}
\label{action_first_hit_extend}
T_{\mathbf{g}^{(k)}}^\pi(s,a) = \inf \left\{t \geq 0 :\varphi(s_{t+i+1}) \in \mathcal{B}_\epsilon(g_i), \  \forall i\in\{0,\ldots,k-1\} \middle| \substack{s_0=s,\ a_0=a \\ a_t\sim\pi(\cdot\mid s_t,g_0) \\ \ s_{t+1}\sim \mathcal{P}(\cdot\mid s_t,a_t)}
\right\},
\end{equation}
where $g_i$ denotes the $i$-th component of $\mathbf{g}^{(k)}$. The original formulation of Eq.~\ref{first_hit} is recovered as the special case $k=1$. When $g_i=g$ for all $i\in\{0,\ldots,k-1\}$, the event requires the agent to remain
within $\mathcal{B}_\epsilon(g)$ for $k$ consecutive timesteps. An agent that enters the goal region and immediately exits does not trigger the survival event and is penalized accordingly.

\textbf{Extended value function.} This formulation inherits the full survival analysis structure of~\citet{tiofack2026svl}. Assuming episode termination upon $\mathbf{g}^{(k)}$ achievement, we define the extended action-value function evaluating the ability of policy $\pi$ to achieve $\mathbf{g}^{(k)}$ as:
\begin{equation}
\label{eq:Q_srlpp}
Q^\pi_{\mathbf{g}^{(k)}}(s,a) = -\sum_{t=0}^{\infty}
\gamma^t\Pr\bigl(T_{\mathbf{g}^{(k)}}^\pi(s,a) > t\bigr),
\end{equation}
which extends the identity in Eq.~\ref{svl_q_function} to the dwell time at goal setting. Following~\citep{tiofack2026svl}, we estimate $Q^\pi_{\mathbf{g}^{(k)}}(s,a)$ by training a parametrized hazard model
\begin{equation}
h^{\pi}_\theta(t \mid s,a,\mathbf{g}^{(k)}) \triangleq \Pr\bigl(T_{\mathbf{g}^{(k)}}^\pi(s,a)=t \mid T_{\mathbf{g}^{(k)}}^\pi(s,a) \geq t\bigr)
\end{equation}
via the right-censored negative log-likelihood objective of:
\begin{equation}
\label{nll}
\mathcal{L}_{\text{hazard}}(\theta) = -\mathbb{E}_{(s,a,\mathbf{g}^{(k)},t_h,c,\delta) \sim \mathcal{D}}\left[
\delta\,\ell_{t_h}(\theta)+(1-\delta)\,\ell_c(\theta)\right],
\end{equation}
where the event log-likelihood $\ell_{t_h}$ and censored log-likelihood $\ell_c$ are given by:
\begin{equation}
\ell_{t_h}(\theta)\! =\! \log h_{\theta}^\pi({t_h}\mid s,a,\mathbf{g}^{(k)})
+\sum_{j=0}^{{t_h}-1}\log\bigl(1-h_{\theta}^\pi(j\mid s,a,\mathbf{g}^{(k)})\bigr), \ \
\ell_c(\theta)\!=\!\sum_{j=0}^{c}\log\bigl(1-h_{\theta}^\pi(j\mid s,a,\mathbf{g}^{(k)})\bigr)
\end{equation}
The survival probability $\Pr(T_{\mathbf{g}^{(k)}}^\pi(s,a)>t) = \prod_{j=0}^{t}\bigl(1-h^\pi_{\theta}(j\mid s,a,\mathbf{g}^{(k)})\bigr)$ is then substituted into Eq.~\ref{eq:Q_srlpp} to recover the extended value function. 
\subsection{Policy update and algorithm summary}
\begin{wrapfigure}{r}{0.51\textwidth}
\vspace{-3.5em}
\vspace{-\baselineskip}
\begin{minipage}{0.49\textwidth}
\begin{algorithm}[H]
\footnotesize
\caption{\textbf{Survival Reinforcement Learning}}
\label{alg:srl}
\begin{algorithmic}[1]
\STATE Initialize critic $\theta$, actor $\psi$, replay buffer $\mathcal{D}$
\WHILE{not converged}
    \WHILE{Data collection}
        \STATE Sample goal $g \sim p_g$
        \STATE Policy rollout $\tau \sim \rho^{\pi_\psi}(\cdot \mid g)$
        \STATE Construct $(s,\! a,\! \mathbf{g}^{(k)},\! t_h,\! c, \! \delta)$, store in $\mathcal{D}$
    \ENDWHILE
    \FOR{each gradient step}
        \STATE Sample $(s, a, \mathbf{g}^{(k)}, t_h, c, \delta)$ from $\mathcal{D}$
        \STATE $\theta \leftarrow \theta - \lambda_\theta \nabla_\theta \mathcal{L}_{\text{hazard}}(\theta)$
        \STATE $\psi \leftarrow \psi - \lambda_\psi \nabla_\psi \mathcal{L}_{\text{actor}}(\psi)$
    \ENDFOR
\ENDWHILE
\end{algorithmic}
\end{algorithm}
\end{minipage}
\vspace{-2em}
\end{wrapfigure}

This extension requires no new loss function or architectural modification beyond the original survival learning framework.
The actor $\pi_\psi(\cdot\mid s,g)$ is conditioned on a single point goal $g\in\mathcal{G}$, preserving the standard goal-conditioned RL interface. During actor updates, we instantiate the goal sequence as $\mathbf{g}^{(k)}=(g,g,\ldots,g)$, which aligns the critic signal with the dwell time at goal objective and incentivizes the agent to reach and remain at $g$. The stochastic Gaussian policy is trained by minimizing the maximum-entropy objective:
\begin{equation}
\label{eq:actor_srlpp}
\mathcal{L}_{\text{actor}}(\psi) = \mathbb{E}_{\substack{(s,g)\sim\mathcal{D}, \
a\sim\pi_\psi(\cdot\mid s,g)}}\Bigl[
-Q^\pi_{(g,\ldots,g)}(s,a) + \alpha\log\pi_\psi(a\mid s,g)\Bigr],
\end{equation}
where $\alpha > 0$ is the entropy temperature~\citep{haarnoja2018soft}. Although SRL is formally on-policy since the hazard is estimated under the policy that collected the data, we follow standard practice~\citep{haarnoja2018soft, fujimoto2018addressing, eysenbach2022contrastive} and perform multiple gradient steps per transition, treating the replay buffer as an off-policy dataset. 
The pseudocode is provided in Algorithm ~\ref{alg:srl}, and we refer to the Appendix for additional details.

\subsection{Goal-occupancy bound} 
The following proposition justifies our dwell-time formulation: goal-occupancy is lower-bounded by the remaining horizon upon arrival, discounted by the probability of premature exit.
Sec.~\ref{sec:exp-stacking} illustrates that, in practice, minimizing the hitting time $T^\pi_{\mathbf{g}^{(k)}}$ decreases the exit probability $p^\pi_k$.

\begin{proposition}[Goal-occupancy lower bound]
\label{prop:occupancy-bound}
For a given policy $\pi$ and fixed episode length $H$, let
\begin{equation*}
p_k^\pi = \max_{t \in \{1,\ldots,H\}} \mathbb{E}_{\substack{g \sim p_g,\, s_0 \sim \mu(\cdot|g), \,  a_0 \sim \pi(\cdot \mid s_0, g) \\  \ \tau \sim \rho^\pi(\cdot|g,s_0, a_0)}}\!\left[\mathbf{1}_{\{\varphi(s_t) \notin \mathcal{B}_\epsilon(g)\}} \mid T_{\mathbf{g}^{(k)}}^\pi(s_0, a_0) < t\right],
\end{equation*}
where $\mathbf{g}^{(k)}=(g,g,\ldots,g)$ and  $T_{\mathbf{g}^{(k)}}^\pi(s_0, a_0)$ is the extended hitting time defined in Eq.~\ref{action_first_hit_extend}. Then:
\begin{equation}
\mathbb{E}_{\substack{g \sim p_g \\ \tau \sim \rho^\pi(\cdot\mid g)}} \left[ \sum_{t=1}^H \mathbf{1}_{\{ \varphi(s_t) \in \mathcal{B}_\epsilon(g) \}} \right] \geq (1 - p_k^\pi)\left(H - \mathbb{E}_{\substack{g \sim p_g, \ s_0 \sim \mu(\cdot|g) \\ \ a_0 \sim \pi(\cdot \mid s_0, g)}}\left[T_{\mathbf{g}^{(k)}}^\pi(s_0, a_0) \wedge H\right]\right).
\label{inequality}
\end{equation}
\end{proposition}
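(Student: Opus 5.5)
Write $T = T^\pi_{\mathbf{g}^{(k)}}(s_0,a_0)$ with $\mathbf{g}^{(k)}=(g,\dots,g)$, and let all expectations be over $g\sim p_g$, $s_0\sim\mu(\cdot\mid g)$, $a_0\sim\pi(\cdot\mid s_0,g)$ and $\tau\sim\rho^\pi(\cdot\mid g,s_0,a_0)$. The plan is to keep, in the goal-occupancy sum, only the time steps that come after the extended hitting event, and to control each such term using the definition of $p_k^\pi$.

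\textbf{Step 1: restrict to post-hitting times.} All indicators are nonnegative, so
\begin{equation*}
\sum_{t=1}^H \mathbf{1}_{\{\varphi(s_t)\in\mathcal{B}_\epsilon(g)\}} \;\geq\; \sum_{t=1}^H \mathbf{1}_{\{\varphi(s_t)\in\mathcal{B}_\epsilon(g)\}}\,\mathbf{1}_{\{T<t\}}.
\end{equation*}
Taking expectations and using linearity, each term becomes
\begin{equation*}
\Pr(\varphi(s_t)\in\mathcal{B}_\epsilon(g),\,T<t)=\Pr(T<t)\bigl(1-\mathbb{E}[\mathbf{1}_{\{\varphi(s_t)\notin\mathcal{B}_\epsilon(g)\}}\mid T<t]\bigr)\;\geq\;\Pr(T<t)\,(1-p_k^\pi).
\end{equation*}
The inequality holds because $p_k^\pi$ is the maximum over $t\in\{1,\dots,H\}$ of exactly this conditional exit probability. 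If $\Pr(T<t)=0$, the term is zero and the bound holds trivially, so the conditioning is never an issue.

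\textbf{Step 2: the tail-sum identity.} It remains to show $\sum_{t=1}^H \Pr(T<t) = H - \mathbb{E}[T\wedge H]$. Since $T$ is $\{0,1,\dots\}\cup\{\infty\}$-valued,
\begin{equation*}
\mathbb{E}[T\wedge H]=\sum_{t=0}^{H-1}\Pr(T>t)=\sum_{t=1}^{H}\Pr(T\geq t).
\end{equation*}
Therefore
\begin{equation*}
H-\mathbb{E}[T\wedge H]=\sum_{t=1}^H\bigl(1-\Pr(T\geq t)\bigr)=\sum_{t=1}^H\Pr(T<t).
\end{equation*}
The convention $T=\infty$ causes no problem, because $T\wedge H$ is finite.

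\textbf{Step 3: combine.} Summing the bound from Step 1 over $t$ and applying Step 2 gives Eq.~\eqref{inequality}.

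\textbf{Main obstacle: the probability space.} The left-hand side of Eq.~\eqref{inequality} is written with $\tau\sim\rho^\pi(\cdot\mid g)$, while $p_k^\pi$ conditions on $(s_0,a_0)$. These agree because $\rho^\pi(\cdot\mid g)$ factorizes as $\mu(s_0\mid g)\,\pi(a_0\mid s_0,g)$ times the continuation. So $T$ can be defined on the same trajectory used for the occupancy count, and all quantities live on one probability space.

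\textbf{Secondary check: time-index conventions.} The hitting event at $T$ refers to states $s_{T+1},\dots,s_{T+k}$, but the argument never uses this. It only uses the event $\{T<t\}$ and the definition of $p_k^\pi$, so no alignment of indices is needed.
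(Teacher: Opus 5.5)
Your proof is correct and follows the paper's argument essentially step for step. Like the paper, you restrict the occupancy sum to $\{T<t\}$, write the retained term as $\Pr(T<t)$ minus the exit mass, bound that mass by $p_k^\pi\Pr(T<t)$, and conclude with $\sum_{t=1}^H\Pr(T<t)=H-\mathbb{E}[T\wedge H]$; your explicit derivation of that tail-sum identity and your treatment of the $\Pr(T<t)=0$ case supply details the paper leaves implicit.
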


\textbf{Proof.} We refer readers to Appendix~\ref{app:stacking-proof}.

%% file: sections/5-experiments-ewrl.tex
\begin{figure*}[t]
    \centering
    \includegraphics[width=\textwidth]{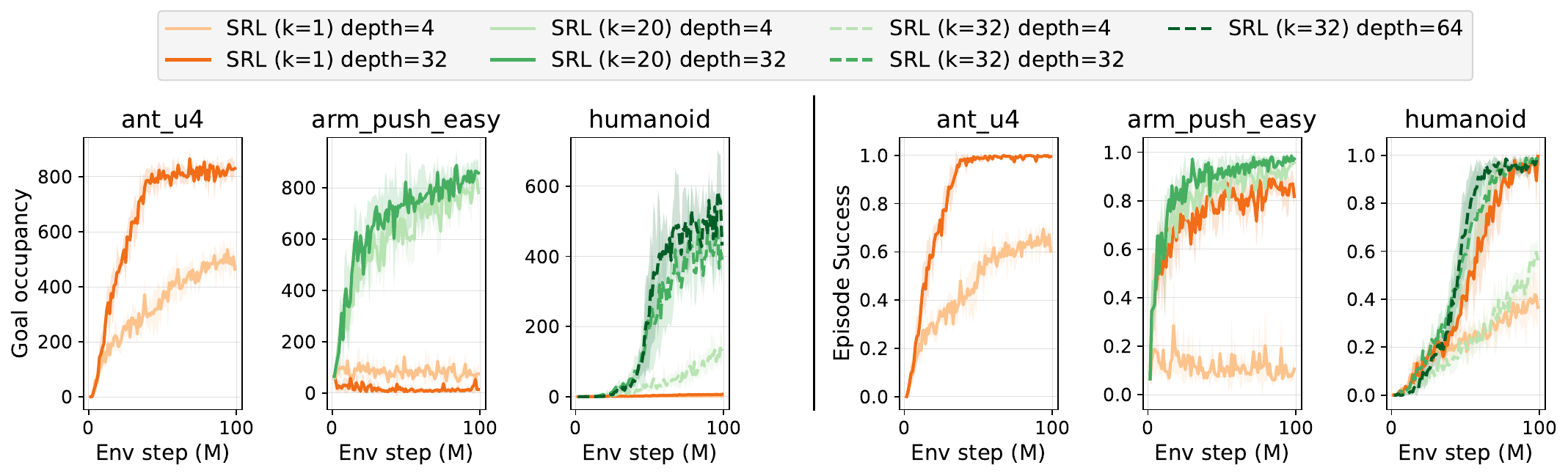}
    \caption{\small \textbf{SRL on two metrics} for varying network depth and goal sequence $k$. The dwell time at goal ($k>1$) is essential for \texttt{Arm Push Easy} and \texttt{Humanoid} to exhibit scaling behavior in goal occupancy.}
    \label{dwell_plot}
    \vspace{-1em}
\end{figure*}

We structure our evaluation around three main questions: 
(i) How important is the dwell time at goal formulation?; 
(ii) How does SRL scale with network depth?;
(iii)~How tight is the occupancy lower bound of Proposition~\ref{prop:occupancy-bound}? 
We use CRL~\citep{eysenbach2022contrastive} as our baseline, as it was shown to be the principal goal-conditioned RL algorithm to consistently scale with network depth~\citep{wang2025}. SRL implementation will be released upon paper acceptance. 
\subsection{Experimental Setup}

We implement SRL based on~\citet{wang2025}, which adapts JaxGCRL~\citep{bortkiewicz2024accelerating} to enable fast online GCRL experiments via Brax~\citep{freeman2021brax}. 
We evaluate on a diverse set of locomotion, navigation, and robotic manipulation tasks (see Fig.~\ref{fig:jaxGCRL-envs}) using the two metrics defined in Eq.~\ref{eqn:sr-go} and fix the episode length to $H=1000$.

Following prior work~\citep{wang2025,lee2024simba,nauman2024bigger}, all networks use residual blocks~\citep{he2016deep}  with four Dense layers each, Layer Normalization~\citep{ba2016layer}, Swish activation~\citep{ramachandran2018searching}, and temporal basis~\citep{tiofack2026svl}, see Appendix~\ref{app:network-architectures}. The depth is the total number of dense layers across all residual blocks. The actor and both critic networks have the same depth and are scaled jointly. We use the piecewise-constant survival parameterization.

\subsection{Why the dwell-time-at-goal formulation?}
\label{sec:dwell-goal-formulation}

\vspace{-0.2em}
\textbf{Stable locomotion (\texttt{Ant U4 Maze}).} 
On this task, the standard survival learning formulation ($k=1$) scales well with network depth and saturates on both metrics, as shown in the first and fourth columns of Fig.~\ref{dwell_plot}. With sufficient capacity (32 layers), the policy learns to move the ant as quickly as possible toward the goal, yielding a bang-bang control solution, as studied by optimal control theory~\citep{maurer2004second, evans1983introduction}. The higher arrival speed causes a slight overshoot, followed by a correction, resulting in a $100\%$ success rate and a goal occupancy above $800/1000$. This behavior is shown in the first row of Fig.~\ref{fig:behavior-imgs}.

\textbf{Challenging locomotion (\texttt{Humanoid}).}  
When $k=1$, the success rate scales well with depth, rising from below $40\%$ at depth 4 to $100\%$ at depth 32. Goal-occupancy remains negligible at all depths, as shown in the third and sixth columns of Fig.~\ref{dwell_plot}. Again, the policy converges to a bang-bang solution: the \texttt{Humanoid} lunges toward the target at full speed with no incentive to decelerate or stabilize, and falls upon arrival. This behavior is illustrated in the second row of Fig.~\ref{fig:behavior-imgs}. The dwell time at goal formulation corrects this by incentivizing the agent to decelerate when approaching the goal, enabling it to stabilize upon arrival, as shown in the third row of Fig.~\ref{fig:behavior-imgs}. With $k=32$, both metrics scale consistently with depth, as depicted in the third and sixth columns of Fig.~\ref{dwell_plot}.

\textbf{Manipulation (\texttt{Arm Push Easy}).} A similar pattern emerges for manipulation, as shown in the second and fifth columns of Fig.~\ref{dwell_plot}. With $k=1$, scaling the depth from 4 to 32 layers improves the success rate from roughly $40\%$ to $90\%$, but goal occupancy remains near zero throughout. The standard objective incentivizes the arm to strike the cube with excessive force, satisfying the transient success condition while failing to achieve sustained occupancy. Setting $k=20$ resolves this: the dwell time at goal formulation seamlessly saturates both metrics.

\begin{figure}[t]
    \centering
    \includegraphics[width=\linewidth]{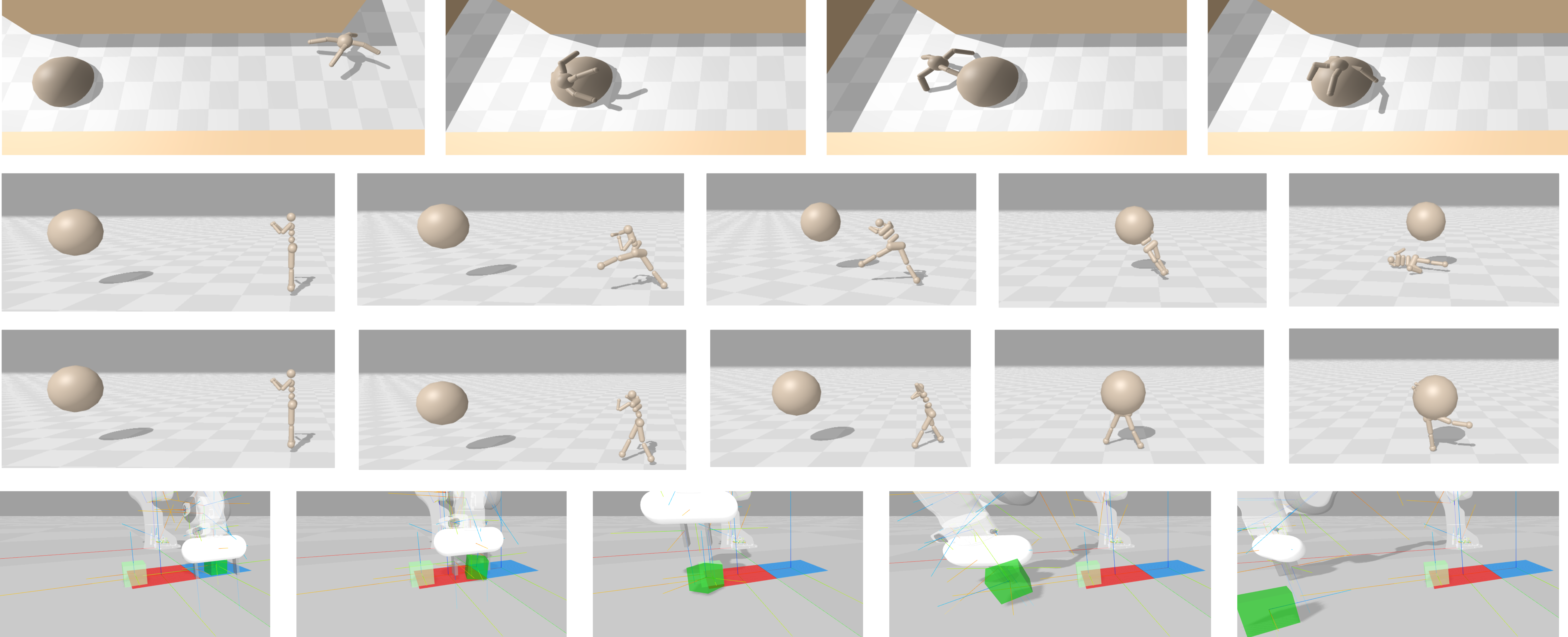}
\caption{\small \textbf{Behavioral visualization of SRL.} \textit{First row:} with \mbox{$k=1$}, the \texttt{Ant} overshoots the target and must reverse direction to return. \textit{Second row:} with \mbox{$k=1$}, the \texttt{Humanoid} lunges toward the target at full speed and collapses upon arrival. \textit{Third row:} with the dwell-goal mechanism (\mbox{$k=32$}), the \texttt{Humanoid} decelerates as it approaches the target to maintain dynamic balance, achieving sustained goal occupancy. \textit{Fourth row:} The arm strikes the cube with excessive force, causing it to overshoot the target position.}
    \label{fig:behavior-imgs}
\end{figure}

 \vspace{-0.2em}
\subsection{Scaling depth in SRL}
\label{sec:exp-main}
\vspace{-0.2em}
Fig.~\ref{fig:teaser} compares scaled CRL and scaled SRL across 12 environments, with SRL network depth scaled up to 32 layers. Despite using half the network depth of scaled CRL, SRL outperforms it by a substantial margin in 7 of 12 environments, achieving an average goal occupancy of $800/1000$ in most of those environments and remaining competitive in the rest. For all environments except \texttt{Ant U6 Maze} and \texttt{Ant U7 Maze}, the scaled CRL results were provided by~\citet{wang2025}.
Fig.~\ref{fig:depth-scaling-srl} shows the scaling performance of SRL across JaxGCRL for varying network depths.

\textbf{Ant navigation.} SRL excels across all AntMaze environments, independently of maze topology and planning horizon. At depth 32, SRL demonstrates significantly faster convergence and achieves goal-occupancy values that are two to four times higher than those of the contrastive baseline. SRL rapidly saturates at an average goal occupancy of $800/1000$ across all maze topologies from \texttt{Ant U4 Maze} to \texttt{Ant Hardest Maze}. Given a maximum episode horizon of $H = 1000$ steps, this represents near-optimal behavior: the agent requires only approximately $200$ steps to traverse any maze and reach the target, after which it successfully occupies the goal region for the remainder of the episode.

\textbf{Humanoid.} On the challenging \texttt{Humanoid} environment, SRL begins to succeed at depth 8 and achieves a goal-occupancy of nearly $600/1000$ when scaled up to depth 32. The resulting policies can stabilize at a given target position, as illustrated in the third row of Fig.~\ref{fig:behavior-imgs}. On \texttt{Humanoid U Maze} and \texttt{Humanoid Big Maze}, SRL exhibits scaling performance comparable to CRL~\citep{wang2025}.

\textbf{Manipulation.} On \texttt{Arm Push Easy}, SRL reaches near-optimal goal-occupancy ($800/1000$) at depth 4 and remains consistent across all depths, whereas CRL requires depth 16 to achieve comparable performance. On \texttt{Arm Binpick Hard}, a performance jump is observed from depth 4 to depth 8. The \texttt{Arm Push Hard} task remains a significant challenge for SRL, due to exploration bottlenecks: since we use vanilla SAC, task-agnostic entropy maximization struggles to discover the initial cube contact (a known failure mode on this environment, leading to trivial HER targets~\citep[Appendix A.5]{bortkiewicz2025accelerating}).

\subsection{SRL on extreme long-horizon tasks}

\begin{wrapfigure}{r}{0.55\textwidth}
    \vspace{-1.6em}
    \centering
    \includegraphics[width=\linewidth]{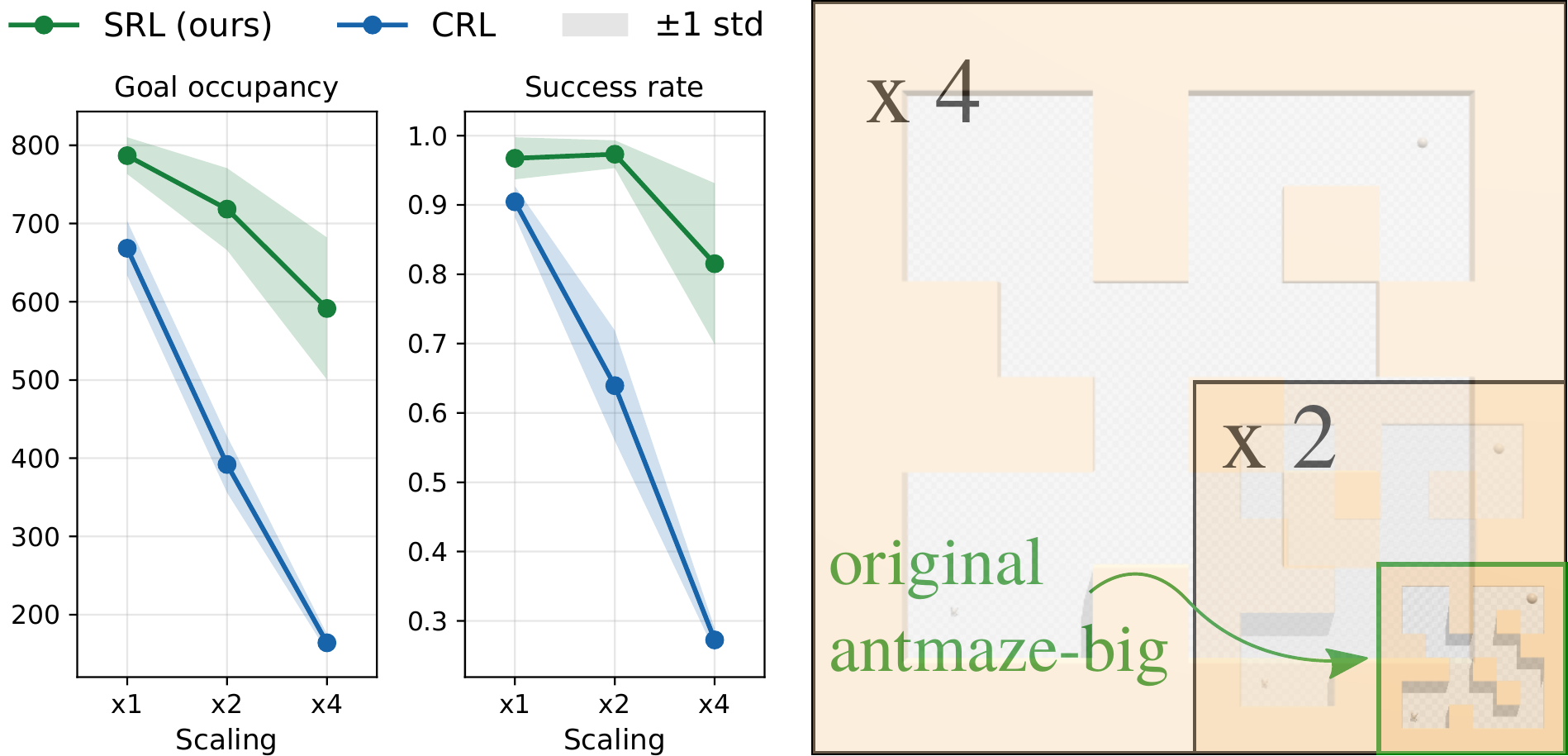}
       \caption{\small \textbf{SRL on extreme long-horizon tasks.} \textit{(left)} As the planning horizon increases beyond standard longest horizon, scaled SRL maintains consistent performance against CRL. \textit{(right)} Visual comparison of the three maze scales.}
        \label{fig:extreme-ant}
        \vspace{-0.5em}
\end{wrapfigure}

The results of Sec.~\ref{sec:exp-main} demonstrate that scaled SRL achieves near-optimal behavior on every \texttt{AntMaze} environment, from \texttt{U4} to \texttt{Hardest}. To probe the limits of SRL on long horizons beyond prior settings~\citep{wang2025}, we use the geometric flexibility of the maze suite. 
Concretely, we take the layout requiring the longest planning horizon, \texttt{Ant Big Maze}, and scale the maze's physical size by increasing \href{https://github.com/MichalBortkiewicz/JaxGCRL/blob/5b6d5d772c834a8269f492d9fe99df13339882aa/jaxgcrl/envs/ant_maze.py#L164}{\texttt{maze\_size\_scaling}} from its default value of $4.0$ to $8.0$ and $16.0$, effectively doubling and quadrupling the maze dimensions. We scale data collection to \textbf{one billion} transitions and retrain both scaled CRL and scaled SRL (depth=32). 
This modification has two compounding effects on task difficulty. First, the minimum number of steps required to traverse the maze grows linearly with the scaling factor, since path lengths increase proportionally, at \mbox{\texttt{maze\_size\_scaling} $= 16.0$}, the agent must navigate an environment four times larger than the standard benchmark. Second, the goal region $\mathcal{B}_\epsilon(g)$ remains fixed in absolute size, making it proportionally smaller and harder to reach precisely as the maze grows. Together, these effects place the task in the extreme long-horizon regime, exacerbating the curse of horizon~\citep{park2025horizon,myers2025horizon}.
As shown in Fig.~\ref{fig:extreme-ant}, the CRL baseline collapses under this extended horizon  ($<30\%$ success) when the maze is quadrupled. Conversely, SRL demonstrates remarkable robustness, maintaining over $80\%$ success and sustained goal occupancy even at the largest scale.

\begin{wrapfigure}{r}{0.55\textwidth}
    \vspace{-5.0em}
    \centering
    \includegraphics[width=\linewidth]{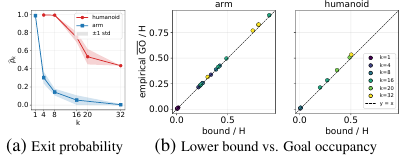}    
    \caption{\small \textbf{Ablation study.} \textbf{(a)} $\hat{p}_k$ decreases monotonically with $k$; \texttt{Humanoid} requires larger $k$ than \texttt{Arm Push Easy} to reduce exit probability, reflecting the difficulty of dynamic balancing versus quasi-static manipulation. \textbf{(b)} The lower bound from Proposition~\ref{prop:occupancy-bound} tracks empirical goal-occupancy closely along the $y=x$ diagonal (axes normalized by $H$), with larger $k$ shifting points toward higher occupancy.}
    \label{fig:stacking-validation}
    \vspace{-1.0em}
\end{wrapfigure}
\subsection{Ablation study}
\label{sec:exp-stacking}
We evaluate SRL on \texttt{Humanoid} and \texttt{Arm Push Easy} on $200$ rollouts post-training at network depth 16 and $k \in \{1, 4, 8, 16, 32\}$.

\textbf{Empirical exit probability.} Fig.~\ref{fig:stacking-validation}a reports the empirical exit probability $\hat{p}_k$. At $k = 1$, both environments yield $\hat{p}_k \approx 1.0$: a single in-goal step provides no incentive to stabilize. Increasing $k$ reduces $\hat{p}_k$ at different rates. On \texttt{Arm Push Easy}, $k = 4$ already drives $\hat{p}_k$ to $\approx 0.30$, decreasing monotonically to $\approx 0$ at $k = 32$. On \texttt{Humanoid}, $\hat{p}_k$ remains near $1.0$ for $k \leq 8$, drops to $\approx 0.75$ at $k = 16$, and reaches $\approx 0.43$ at $k = 32$. This contrast reflects task difficulty: the arm naturally holds the cube at rest once positioned, whereas the humanoid must continuously balance against gravity, requiring a larger $k$ to achieve sustained occupancy.

\textbf{Empirical occupancy lower bound.} Fig.~\ref{fig:stacking-validation}b validates Proposition~\ref{prop:occupancy-bound} by plotting the estimated lower bound against the empirical goal-occupancy, both normalized by $H$. Data points lie closely along the $y = x$ line across $k$, confirming that increasing $k$ improves the lower bound and drives higher occupancy. Runs with $k = 1$ cluster near the origin, while larger $k$ shifts points along the diagonal, reaching $\overline{\mathrm{GO}}/H \approx 0.9$ on \texttt{Arm Push Easy} and $\approx 0.5$ on \texttt{Humanoid} at $k \in \{20, 32\}$.

%% file: sections/6-conclusion.tex
We introduced SRL, an online self-supervised RL algorithm that extends the survival-learning framework of \citet{tiofack2026svl} with a dwell time at goal formulation.
Without requiring additional loss terms or architectural changes, this formulation aligns the optimization target with the goal occupancy metric and recovers stable post-arrival behavior. 
Empirically, scaled SRL completely solves every \texttt{AntMaze} topology at depth $32$, outperforming the depth-$64$ contrastive baseline of~\citet{wang2025}.
Our results extend prior findings of \cite{wang2025} that the choice of self-supervised objective governs depth scaling by showing that the survival objective scales at least as gracefully as the contrastive one and substantially more favorably on long-horizon planning with stable dynamics.

We attribute this to two properties of the survival formulation. 
First, the hazard model is trained by maximum-likelihood classification over discrete hitting-time bins, inheriting the same advantages that motivated classification-based value learning~\cite{farebrother2024stop} and that underlie CRL's scaling behavior~\cite{wang2025}.
Second, unlike InfoNCE, the right-censored likelihood does not enforce uniform spread in the embedding space, sidestepping the uniformity-tolerance dilemma~\cite{wang2021understanding} inherent to CRL. 

\textbf{Limitations.} The dwell time at goal formulation introduces an additional hyperparameter $k$; however, a small set of values ($k \in \{1, 20, 32\}$) proved sufficient across all twelve JaxGCRL tasks. The only environment where SRL does not match scaled CRL is \texttt{Arm Push Hard}, where task-agnostic entropy maximization rarely produces the initial cube contact, causing the policy to collapse onto trivial Hindsight relabeling targets~\citep{bortkiewicz2025accelerating}, a known failure mode shared by most GCRL algorithms. Finally, while the hazard critic produces a full distribution over hitting times, the value function currently uses only a discounted summary, leaving rich temporal information underutilized for policy optimization.

%% file: sections/7-appendix-ewrl.tex
\subsection{Proof of Proposition~\ref*{prop:occupancy-bound}}
\label{app:stacking-proof}
\begin{equation*}
\begin{aligned}
\mathrm{GO}(\pi)
&= \mathbb{E}_{\substack{g \sim p_g,\, s_0 \sim \mu(\cdot|g)\\a_0 \sim \pi(\cdot \mid s_0, g) \\ \tau \sim \rho^\pi(\cdot|g,s_0,a_0)}}\!\left[\sum_{t=1}^{H} \mathbf{1}_{\{\varphi(s_t) \in \mathcal{B}_\epsilon(g)\}}\right] \\
&\geq \mathbb{E}_{\substack{g \sim p_g,\, s_0 \sim \mu(\cdot|g) \\ a_0 \sim \pi(\cdot \mid s_0, g) \\ \tau \sim \rho^\pi(\cdot|g,s_0,a_0)}}\!\left[\sum_{t=1}^{H} \mathbf{1}_{\{\varphi(s_t) \in \mathcal{B}_\epsilon(g),\; T_{\mathbf{g}^{(k)}}^\pi(s_0,a_0) < t\}}\right] \\
&= \mathbb{E}_{\substack{g \sim p_g,\, s_0 \sim \mu(\cdot|g) \\ a_0 \sim \pi(\cdot \mid s_0, g) \\ \tau \sim \rho^\pi(\cdot|g,s_0,a_0)}}\!\left[\sum_{t=1}^{H} \mathbf{1}_{\{T_{\mathbf{g}^{(k)}}^\pi(s_0,a_0) < t\}} - \mathbf{1}_{\{T_{\mathbf{g}^{(k)}}^\pi(s_0,a_0) < t,\; \varphi(s_t) \notin \mathcal{B}_\epsilon(g)\}}\right] \\
&\geq (1 - p_k^\pi) \sum_{t=1}^{H} \mathbb{E}_{\substack{g \sim p_g,\, s_0 \sim \mu(\cdot|g) \\ a_0 \sim \pi(\cdot \mid s_0, g) \\ \tau \sim \rho^\pi(\cdot|g,s_0,a_0)}}\!\left[\mathbf{1}_{\{T_{\mathbf{g}^{(k)}}^\pi(s_0,a_0) < t\}}\right] \\
&= (1 - p_k^\pi)\left(H - \mathbb{E}_{\substack{g \sim p_g, \  s_0 \sim \mu(\cdot|g) \\ a_0 \sim \pi(\cdot \mid s_0, g) \\ \tau \sim \rho^\pi(\cdot|g,s_0,a_0)}}\!\left[T_{\mathbf{g}^{(k)}}^\pi(s_0, a_0) \wedge H\right]\right). \qed
\end{aligned}
\end{equation*}

\subsubsection{Tightness and Empirical Remarks}
\label{app:tighter}

\begin{proposition}[Tight goal occupancy lower bound] \label{prop:tight_occupancy-bound}

For a given policy $\pi$ and fixed episode length $H$, define the time-dependent exit probability
\begin{equation*}
p_k^\pi(t) = \mathbb{E}_{\substack{g \sim p_g, s_0 \sim \mu(\cdot|g) \\  a_0 \sim \pi(\cdot \mid s_0, g) \\ \tau \sim \rho^\pi(\cdot|g,s_0,a_0)}}\left[\mathbf{1}_{\{\varphi(s_t) \notin \mathcal{B}_\epsilon(g)\}} \mid T_{\mathbf{g}^{(k)}}^\pi(s_0, a_0) < t\right],
\end{equation*} 
where $\mathbf{g}^{(k)}=(g,\ldots,g)$ and $T_{\mathbf{g}^{(k)}}^\pi(s_0,a_0)$ is defined in Eq.~\ref{action_first_hit_extend}. Then:

\begin{equation*} 
\mathbb{E}_{\substack{g \sim p_g \\ \tau \sim \rho^\pi(\cdot\mid g)}} \left[ \sum_{t=1}^H \mathbf{1}_{\{ \varphi(s_t) \in \mathcal{B}_\epsilon(g) \}} \right] \geq \sum_{t=1}^{H} \bigl(1 - p_k^\pi(t)\bigr)\mathbb{E}_{\substack{g \sim p_g, \  s_0 \sim \mu(\cdot|g) \\ a_0 \sim \pi(\cdot \mid s_0, g) \\ \tau \sim \rho^\pi(\cdot|g,s_0,a_0)}}\left[\mathbf{1}_{\{T_{\mathbf{g}^{(k)}}^\pi(s_0,a_0) < t\}}\right].
\end{equation*}
\end{proposition}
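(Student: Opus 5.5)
The statement to prove is Proposition~\ref{prop:tight_occupancy-bound}, the time-dependent refinement of Proposition~\ref{prop:occupancy-bound}. The plan follows the same chain of inequalities used for Proposition~\ref{prop:occupancy-bound}, but keeps $p_k^\pi(t)$ inside the sum over $t$ instead of replacing it with its maximum. Throughout, write $T$ for $T_{\mathbf{g}^{(k)}}^\pi(s_0,a_0)$ and $\mathbb{E}$ for the joint expectation over $g\sim p_g$, $s_0\sim\mu(\cdot\mid g)$, $a_0\sim\pi(\cdot\mid s_0,g)$ and $\tau\sim\rho^\pi(\cdot\mid g,s_0,a_0)$. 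Drawing $s_0$ and $a_0$ first and then the rest of the trajectory gives the same law as $\tau\sim\rho^\pi(\cdot\mid g)$, so $\mathrm{GO}(\pi)=\mathbb{E}\bigl[\sum_{t=1}^H \mathbf{1}_{\{\varphi(s_t)\in\mathcal{B}_\epsilon(g)\}}\bigr]$ under this joint law.

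\textbf{Step 1 (restrict to post-hitting times).} For each $t$, the pointwise bound $\mathbf{1}_{\{\varphi(s_t)\in\mathcal{B}_\epsilon(g)\}}\geq \mathbf{1}_{\{\varphi(s_t)\in\mathcal{B}_\epsilon(g),\,T<t\}}$ holds. Summing over $t$ and using linearity of expectation gives
\begin{equation*}
\mathrm{GO}(\pi)\geq\sum_{t=1}^H \Pr\bigl(\varphi(s_t)\in\mathcal{B}_\epsilon(g),\,T<t\bigr).
\end{equation*}

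\textbf{Step 2 (condition on the hitting event).} Fix $t$. If $\Pr(T<t)>0$, the definition of conditional probability gives
\begin{equation*}
\Pr\bigl(\varphi(s_t)\in\mathcal{B}_\epsilon(g),\,T<t\bigr)=\Pr(T<t)\,\bigl(1-p_k^\pi(t)\bigr),
\end{equation*}
because $p_k^\pi(t)$ is exactly $\Pr(\varphi(s_t)\notin\mathcal{B}_\epsilon(g)\mid T<t)$ under the same joint law. If $\Pr(T<t)=0$, both sides are zero, so the identity holds for any convention assigning $p_k^\pi(t)$ a value in $[0,1]$. Finally, $\Pr(T<t)=\mathbb{E}[\mathbf{1}_{\{T<t\}}]$, which gives the claimed bound term by term.

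The argument is essentially elementary, since the identity in Step 2 holds with equality. The only delicate point is bookkeeping: the conditional exit probability and the occupancy must be taken under the same joint law over $(g,s_0,a_0,\tau)$. Checking that consistency is what I expect to be the main obstacle.

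As a consistency check, bounding $p_k^\pi(t)\leq p_k^\pi$ recovers Proposition~\ref{prop:occupancy-bound}, using
\begin{equation*}
\sum_{t=1}^H\Pr(T<t)=\mathbb{E}\Bigl[\sum_{t=1}^H\mathbf{1}_{\{T\le t-1\}}\Bigr]=\mathbb{E}\bigl[(H-T)^+\bigr]=H-\mathbb{E}[T\wedge H].
\end{equation*}
Here $T\ge 0$ is integer-valued and the convention $T=\infty$ contributes zero. The new bound is therefore tighter, since it replaces the worst-case $p_k^\pi$ by the per-time values $p_k^\pi(t)$.
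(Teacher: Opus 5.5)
Your proposal is correct and follows essentially the same route as the paper's proof. Both restrict the occupancy indicator to the event $\{T<t\}$ and then rewrite each summand through the conditional exit probability $p_k^\pi(t)$; the paper does this by subtracting $\mathbf{1}_{\{T<t,\,\varphi(s_t)\notin\mathcal{B}_\epsilon(g)\}}$, while you apply the definition of conditional probability directly. Your explicit treatment of the case $\Pr(T<t)=0$, where $p_k^\pi(t)$ is undefined, is a small rigor point that the paper leaves implicit.
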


\textbf{Note.} Proposition~\ref{prop:occupancy-bound} is recovered from Proposition~\ref{prop:tight_occupancy-bound} by bounding \mbox{$p_k^\pi(t) \leq p_k^\pi \triangleq \max_{t \in \{1,\ldots,H\}} p_k^\pi(t)$} for all $t$, and applying the identity: 
$$\sum_{t=1}^{H} \mathbb{E}_{\substack{g \sim p_g,\, s_0 \sim \mu(\cdot|g) \\ a_0 \sim \pi(\cdot \mid s_0, g) \\ \tau \sim \rho^\pi(\cdot|g,s_0,a_0)}}\!\left[\mathbf{1}_{\{T_{\mathbf{g}^{(k)}}^\pi(s_0,a_0) < t\}}\right] = H - \mathbb{E}_{\substack{g \sim p_g,\, s_0 \sim \mu(\cdot|g) \\ a_0 \sim \pi(\cdot \mid s_0, g)}}\!\left[T_{\mathbf{g}^{(k)}}^\pi(s_0,a_0) \wedge H\right].$$

\paragraph{Proof Proposition 2}
\begin{equation*}
\begin{aligned}
\mathrm{GO}(\pi)
&= \mathbb{E}_{\substack{g \sim p_g,\, s_0 \sim \mu(\cdot|g)\\a_0 \sim \pi(\cdot \mid s_0, g) \\ \tau \sim \rho^\pi(\cdot|g,s_0,a_0)}}\!\left[\sum_{t=1}^{H} \mathbf{1}_{\{\varphi(s_t) \in \mathcal{B}_\epsilon(g)\}}\right] \\
&\geq \mathbb{E}_{\substack{g \sim p_g,\, s_0 \sim \mu(\cdot|g) \\ a_0 \sim \pi(\cdot \mid s_0, g) \\ \tau \sim \rho^\pi(\cdot|g,s_0,a_0)}}\!\left[\sum_{t=1}^{H} \mathbf{1}_{\{\varphi(s_t) \in \mathcal{B}_\epsilon(g),\; T_{\mathbf{g}^{(k)}}^\pi(s_0,a_0) < t\}}\right] \\
&= \sum_{t=1}^{H} \mathbb{E}_{\substack{g \sim p_g,\, s_0 \sim \mu(\cdot|g) \\ a_0 \sim \pi(\cdot \mid s_0, g) \\ \tau \sim \rho^\pi(\cdot|g,s_0,a_0)}}\!\left[\mathbf{1}_{\{T_{\mathbf{g}^{(k)}}^\pi(s_0,a_0) < t\}} - \mathbf{1}_{\{T_{\mathbf{g}^{(k)}}^\pi(s_0,a_0) < t,\; \varphi(s_t) \notin \mathcal{B}_\epsilon(g)\}}\right] \\
&= \sum_{t=1}^{H} \mathbb{E}_{\substack{g \sim p_g,\, s_0 \sim \mu(\cdot|g) \\ a_0 \sim \pi(\cdot \mid s_0, g) \\ \tau \sim \rho^\pi(\cdot|g,s_0,a_0)}}\!\left[\mathbf{1}_{\{T_{\mathbf{g}^{(k)}}^\pi(s_0,a_0) < t\}}\right] - \sum_{t=1}^{H}\mathbb{E}_{\substack{g \sim p_g,\, s_0 \sim \mu(\cdot|g) \\ a_0 \sim \pi(\cdot \mid s_0, g) \\ \tau \sim \rho^\pi(\cdot|g,s_0,a_0)}}\!\left[\mathbf{1}_{\{T_{\mathbf{g}^{(k)}}^\pi(s_0,a_0) < t,\; \varphi(s_t) \notin \mathcal{B}_\epsilon(g)\}}\right] \\
&= \sum_{t=1}^{H} \mathbb{E}_{\substack{g \sim p_g,\, s_0 \sim \mu(\cdot|g) \\ a_0 \sim \pi(\cdot \mid s_0, g) \\ \tau \sim \rho^\pi(\cdot|g,s_0,a_0)}}\!\left[\mathbf{1}_{\{T_{\mathbf{g}^{(k)}}^\pi(s_0,a_0) < t\}}\right] - \sum_{t=1}^{H} p_k^\pi(t)\,\mathbb{E}_{\substack{g \sim p_g,\, s_0 \sim \mu(\cdot|g) \\ a_0 \sim \pi(\cdot \mid s_0, g) \\ \tau \sim \rho^\pi(\cdot|g,s_0,a_0)}}\!\left[\mathbf{1}_{\{T_{\mathbf{g}^{(k)}}^\pi(s_0,a_0) < t\}}\right] \\
&= \sum_{t=1}^{H} \bigl(1 - p_k^\pi(t)\bigr)\,\mathbb{E}_{\substack{g \sim p_g,\, s_0 \sim \mu(\cdot|g) \\ a_0 \sim \pi(\cdot \mid s_0, g) \\ \tau \sim \rho^\pi(\cdot|g,s_0,a_0)}}\!\left[\mathbf{1}_{\{T_{\mathbf{g}^{(k)}}^\pi(s_0,a_0) < t\}}\right]. \qed
\end{aligned}
\end{equation*}

\subsection{Network Architectures}
\label{app:network-architectures}
Figure~\ref{fig:network-architecture} illustrates the architecture of the hazard critic $h_\theta(t \mid s, a, g)$ and the Gaussian actor $\pi_\psi(a \mid s, g)$. Both networks are built from a common residual block consisting of four \texttt{(Linear, LayerNorm, Swish)} layers wrapped by an identity skip connection. Following~\citet{wang2025}, we define the network depth as the total number of \texttt{Linear} layers across residual blocks, so an encoder of depth $N$ contains $N/4$ residual blocks.

\begin{figure*}[h]
    \centering
    \includegraphics[width=\textwidth]{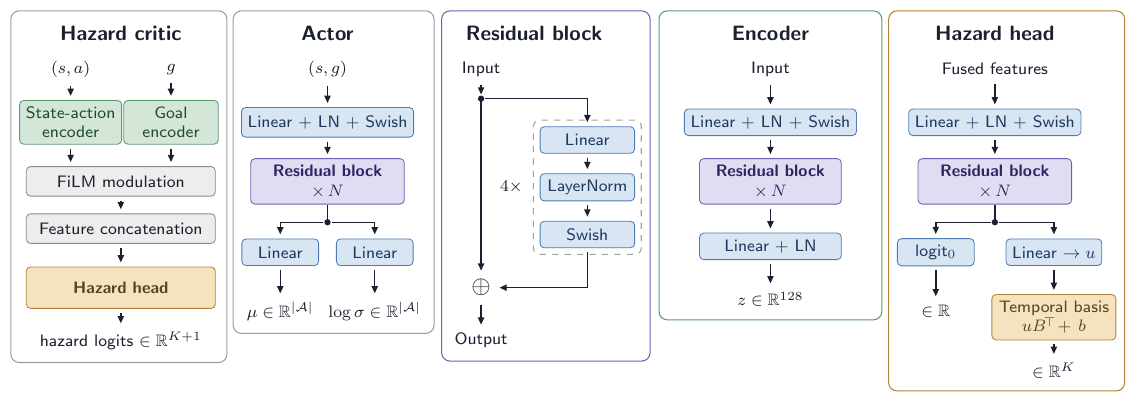}
    \caption{\textbf{Network architecture.}
    Hazard critic $h_\theta(t \mid s, a, g)$ (leftmost panel): a state--action encoder and a goal encoder produce $128$-dimensional embeddings, which are fused via FiLM modulation followed by feature concatenation and passed to the hazard head. 
    The head emits $K{+}1$ logits: each logit $\mathrm{logit}_t$ parameterized the  probability $h_\theta(t \mid s, a, g) = \sigma(\mathrm{logit}_t)$. The $K$ per-bin hazard logits $\ell_{1:K} = u B^{\top} + b$ are obtained by combining a feature vector $u$  with a learned temporal basis $B$ and bias $b$.
    Actor $\pi_\psi(a \mid s, g)$ (second panel): outputs the mean $\mu$ and
    log standard deviation $\log \sigma$ of a diagonal Gaussian policy.
    Residual block (center): four \texttt{(Linear, LayerNorm, Swish)} layers with an identity skip connection. 
    Encoder and hazard head (right panels): a \texttt{(Linear+LN+Swish)} stem followed by residual
    blocks; the two encoders share this template.}
    \label{fig:network-architecture}
\end{figure*}

\textbf{Encoders.} The state--action pair $(s, a)$ and the desired goal $g$ are processed by two independent encoders, each mapping its input to a $128$-dimensional latent $z \in \mathbb{R}^{128}$.

\textbf{Fusion.} Encoder outputs are combined in two stages: the goal embedding first produces a pair of affine parameters $(\gamma, \beta)$ applied channel-wise to the state--action embedding (FiLM modulation~\citep{perez2018film}). The modulated state--action embedding is then
concatenated with the goal embedding before being passed to the hazard head.

\textbf{Hazard head.} The hazard head takes as input the concatenated features vector and outputs $K+1$ logits, each logit parameterizing $\sigma(\mathrm{logit}_t) = h_\theta(t \mid s, a, g)$. The $K$ per-bin hazard logits $\ell_{1:K} = u B^{\top} + b$ are obtained by combining a feature vector $u \in \mathbb{R}^{r}$  with a learnable temporal basis $B \in \mathbb{R}^{K \times r}$ and bias $b \in \mathbb{R}^{K}$. For more details, see the architecture of~\citet{tiofack2026svl} (their Appendix~A.4).

\textbf{Actor.} The actor takes $(s, g)$ as input and outputs  the mean $\mu \in \mathbb{R}^{|\mathcal{A}|}$ and log standard deviation $\log \sigma \in \mathbb{R}^{|\mathcal{A}|}$ of a diagonal Gaussian policy, which is trained with the entropy-regularised SAC objective in Eq.~\ref{eq:actor_srlpp}

\textbf{Binning strategy and value computation.}
We adopt the geometric-time binning~\citep{tiofack2026svl}. Given horizon $H$ and $K$ bins, bin edges are $b_k = \lfloor \rho^{k} \rfloor$ with $\rho = H^{1/K}$, which yields short bins near $t = 0$ (where $\gamma^{t}$ is large) and longer bins near the horizon. Among the discretization schemes introduced in~\citet{tiofack2026svl}, we use piecewise-constant survival (PCS) parameterization, in which the survival function is kept constant within each bin, $S_\theta^{\pi}(t \mid s, a, g) = S_\theta^{\pi}(b_k \mid s, a, g)$ for $t \in [b_k, b_{k+1})$. The resulting Q-estimator is
\begin{equation}
Q_\theta(s, a, g) \;=\; -\sum_{k=0}^{K-1} D_k \, S_\theta^{\pi}(b_k \mid s, a, g),
\qquad
D_k \;=\; \sum_{t=b_k}^{b_{k+1}-1} \gamma^{t},
\end{equation}
with bin-start survivals computed recursively from $\mathrm{logit}_0$ and the per-bin interval hazards $\tilde h_k = \sigma(\ell_k)$ through $\log S(b_0) = \log(1 - p_0)$ and $\log S(b_{k+1}) = \log S(b_k) + \log(1 - \tilde h_k)$. When $\gamma < 1$ we optionally add the geometric tail $\gamma^{H} S(b_K) / (1 - \gamma)$ to account for occupancy beyond the bin grid. The corresponding grouped-time log-likelihood used to train $h_\theta$ is the PCS objective derived in Appendix~A.5 of~\citet{tiofack2026svl}. It is important to note that when $H=K$ (which is the case for most of our experiments), each bin contains exactly one timestep. 

\begin{table*}[h]
\caption{\textbf{Hyperparameters.} Parameters not specific to SRL are taken from~\citet{tiofack2026svl}.}
\label{tab:hyperparameters}
\centering
\begin{tabular}{ll}
\toprule
\textbf{Hyperparameter} & \textbf{Value} \\ 
\midrule
Learning rate & $3\times10^{-4}$ \\
Optimizer & Adam \\
Batch size & 512 (default), 1024 (\texttt{Humanoid}) \\
Environment steps & 100M--1B (varies across experiments) \\
Update-to-data (UTD) ratio & 1:40 \\
MLP width & 256 \\
Discount factor $\gamma$ & 0.999 (default), 0.99 (\texttt{Arm\_hard}) \\
Last interval edge $b_K$ & 1000 \\
Number of bins $K$ & $\min\{b_K,\, \texttt{env\_episode\_length}\}$ \\
Value ($p^{\mathcal{D}}_{\text{cur}}, p^{\mathcal{D}}_{\text{traj}}, p^{\mathcal{D}}_{\text{rand}}$) ratio & (0.05, 0.85, 0.1) \\
Maximum replay size & 10\, 000 \\
Minimum replay size & 1\,000 \\
Networks depth & depends on the experiment \\
\bottomrule
\end{tabular}
\end{table*}

The parameter \texttt{env\_episode\_length} in Tab.~\ref{tab:hyperparameters} corresponds to the environment predefine episode length in JaxGCRL.

\begin{algorithm}[t]
\captionsetup{font=normalsize, labelfont={normalsize,bf}}
\caption{\textbf{Survival Reinforcement Learning (SRL)}}
\label{alg-srl}
\begin{algorithmic}[1]
\STATE \textbf{Input} Stacking parameter $k$, rollout horizon $c$, discount factor $\gamma$
\STATE Initialize hazard critic $h_{\theta}$, actor $\pi_{\psi}$, empty replay buffer $\mathcal{B}$
\WHILE{not done}
  \STATE \textit{\textcolor{blue}{\# 1. Data collection}}
  \STATE Sample task goal $g \sim p_{g}$ and initial state $s_{0}$
  \STATE Collect trajectory $(s_0, a_0, \dots, s_c)$ using actor $\pi_{\psi}(\cdot \mid s_t, g)$
  
  \STATE \textit{\textcolor{blue}{\# 2. Hindsight relabeling and survival evaluation}}
  \FOR{each timestep $t$ and sampled hindsight goal $g'$}
      \STATE Construct $k$-length target sequence $\mathbf{g}^{(k)}$ \hfill (Eq.~\ref{eq:goal_seq})
      \STATE Compute first hitting time $\tau_t$ requiring matching of $\mathbf{g}^{(k)}$
      \STATE Set censorship indicator $\delta_t = 1$ if sequence is matched within $c$, else $0$
      \STATE Store survival tuple $(s_t, a_t, \mathbf{g}^{(k)}, t_h, c, \delta_t)$ in $\mathcal{B}$
  \ENDFOR
  
  \STATE \textit{\textcolor{blue}{\# 3. Updates}}
  \FOR{$n_{\mathrm{grad}}$ gradient steps}
    \STATE Sample mini-batch of survival tuples $(s_t, a_t, \mathbf{g}^{(k)}, t_h, c, \delta_t) \sim \mathcal{B}$
    \STATE Update hazard critic $h_\theta$ by minimizing the right-censored NLL \hfill (Eq.~\ref{nll})
    \STATE Estimate action-value $Q^\pi_{\mathbf{g}^{(k)}}(s_t,a_t)$ via the survival identity \hfill (Eq.~\ref{eq:Q_srlpp})
    \STATE Update actor $\pi_\psi$ by minimizing the maximum entropy objective \hfill (Eq.~\ref{eq:actor_srlpp})
  \ENDFOR
\ENDWHILE
\STATE \textbf{Output} Optimized actor policy $\pi_{\psi}$
\end{algorithmic}
\end{algorithm}

\begin{figure*}[t]
    \centering
    \includegraphics[width=\textwidth]{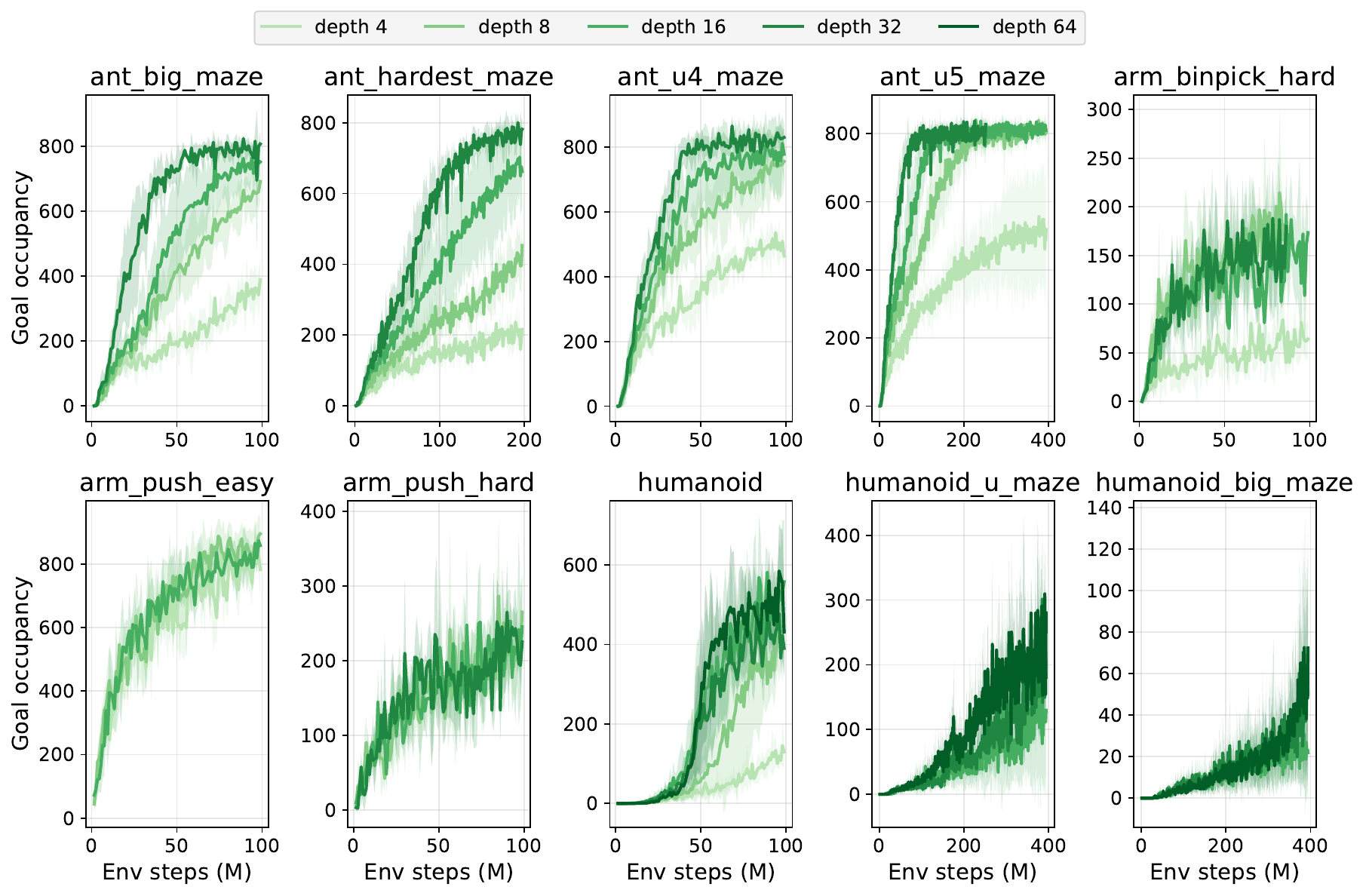}
    \caption{\textbf{Depth scaling of SRL} across the JaxGCRL
    suite. Goal occupancy (mean $\pm$ std over 4 seeds).} \label{fig:depth-scaling-srl}
    \vspace{-1em}
\end{figure*}

\begin{figure*}[t]
    \centering
    \includegraphics[width=\textwidth]{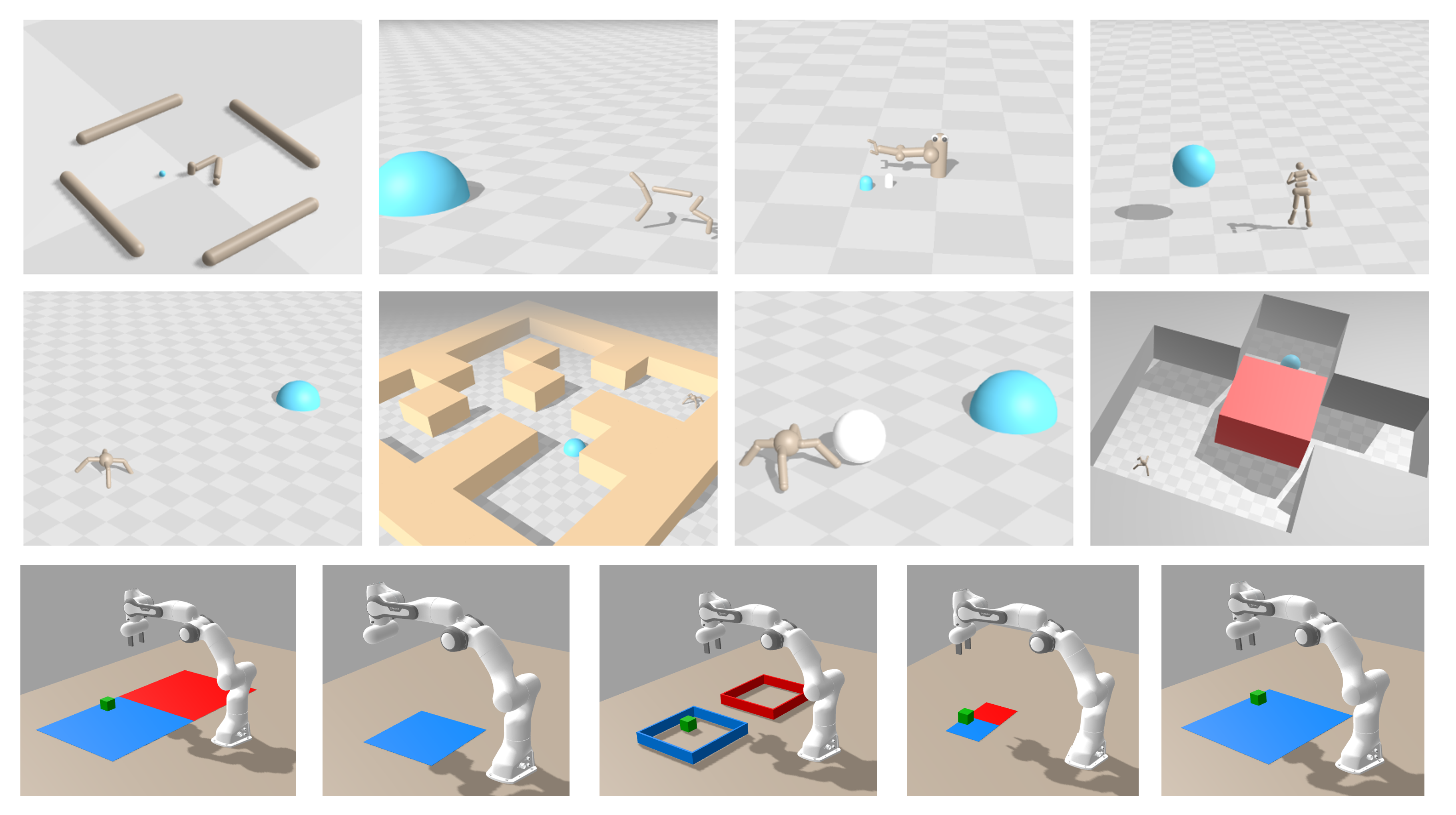}
    \caption{\textbf{Some environments and tasks available in JaxGCRL.} Image from \cite{bortkiewicz2025accelerating}.} \label{fig:jaxGCRL-envs}
    \vspace{-1em}
\end{figure*}